%% file: main.tex
\documentclass{article}

\usepackage{microtype}
\usepackage{graphicx}
\usepackage{subcaption}
\usepackage{booktabs} %

\usepackage{hyperref}

\usepackage[preprint]{icml2026}

\usepackage{amsmath}
\usepackage{amssymb}
\usepackage{mathtools}
\usepackage{amsthm}

\usepackage[capitalize,noabbrev]{cleveref}

\theoremstyle{plain}
\newtheorem{theorem}{Theorem}[section]

\newtheorem{lemma}[theorem]{Lemma}

\theoremstyle{definition}
\newtheorem{definition}[theorem]{Definition}

\theoremstyle{remark}

\usepackage[textsize=tiny]{todonotes}

\icmltitlerunning{Knowledge Gradient for Preference Learning}

\usepackage{macro}
\usepackage{lipsum}

\begin{document}

\twocolumn[
  \icmltitle{Knowledge Gradient for Preference Learning}

  \icmlsetsymbol{equal}{*}

  \begin{icmlauthorlist}
    \icmlauthor{Kaiwen Wu}{upenn}
    \icmlauthor{Jacob R. Gardner}{upenn}
  \end{icmlauthorlist}

  \icmlaffiliation{upenn}{University of Pennsylvania, Philadelphia, PA, United States}

  \icmlcorrespondingauthor{Kaiwen Wu}{kaiwenwu@seas.upenn.edu}
  \icmlcorrespondingauthor{Jacob R. Gardner}{jacobrg@seas.upenn.edu}

  \icmlkeywords{Machine Learning}

  \vskip 0.3in
]

\printAffiliationsAndNotice{}  %

\begin{abstract}
The knowledge gradient is a popular acquisition function in Bayesian optimization (BO) for optimizing black-box objectives with noisy function evaluations.
Many practical settings, however, allow only pairwise comparison queries, yielding a preferential BO problem where direct function evaluations are unavailable.
Extending the knowledge gradient to preferential BO is hindered by its computational challenge.
At its core, the look-ahead step in the preferential setting requires computing a non-Gaussian posterior, which was previously considered intractable.
In this paper, we address this challenge by deriving an exact and analytical knowledge gradient for preferential BO.
We show that the exact knowledge gradient performs strongly on a suite of benchmark problems, often outperforming existing acquisition functions.
In addition, we also present a case study illustrating the limitation of the knowledge gradient in certain scenarios.
\end{abstract}

\section{Introduction}
\label{sec:intro}
\input{paper/introduction}

\section{Background}
\label{sec:background}
\input{paper/background}

\section{Analytical Knowledge Gradient for Preferential Bayesian Optimization}
\label{sec:method}
\input{paper/method}

\section{Experiments}
\input{paper/experiments}

\section{Related Work}
\input{paper/related}

\section{Discussion}
\input{paper/discussion}

\clearpage

\section*{Impact Statement}
This paper presents work whose goal is to advance the field of machine learning.
There might be many potential societal consequences of our work, none of which we feel must be specifically highlighted here.

\bibliography{ref}
\bibliographystyle{icml2026}

\newpage
\appendix
\onecolumn

\section{Extended Skew Normal Distributions}
\label{sec:extended-skew-normal}
\input{appendix/extended-skew-normal}

\section{Proofs}
\label{sec:proofs}
\input{appendix/proofs}

\section{Additional Experimental Details}
\label{sec:additional-experiments}
\input{appendix/additional-experiments}

\end{document}

%% file: paper/introduction.tex
Preference feedback is often the most practical or sometimes even the only available supervision signal in many real-world applications.
As a result, preference learning is prominent in various fields including A/B testing,
recommender systems \citep{furnkranz2010preference},
human-in-the-loop psychophysics experimental design \citep{letham2022look,wu2025mixed},
robotics \citep{woodworth2018preference,li2021roial},
reinforcement learning \citep{christiano2017deep},
and more recently large language model alignment \citep{ziegler2019fine,stiennon2020learning,ouyang2022training,rafailov2023direct}.

This paper is concerned with adaptively learning preferences from pairwise comparisons.
Namely, we wish to learn the preferences by iteratively collecting data from an oracle (\eg, the user) and eventually identify the most preferred item.
Preferential Bayesian optimization (PBO) is a popular framework for this problem \citep{gonzalez2017preferential}.
In PBO, the preferences are generated according to a hidden latent utility function, which is learned by a probabilistic model.
Then, the probabilistic model is leveraged to guide the data collection process by selecting the most informative pairwise comparisons via an acquisition function.

The knowledge gradient is a popular acquisition function for (standard) BO that leverages look-ahead posterior updates to make informed decisions \citep{frazier2018tutorial}.
The main idea is to select the query candidate that maximizes the expected increase in the maximum posterior mean of the surrogate model.
The knowledge gradient is known to be one-step Bayes optimal, and it has been shown to perform well empirically \citep{frazier2018tutorial}.
Initially proposed for the ranking and selection problem over discrete domains \citep{frazier2008knowledge,frazier2009knowledge}, the knowledge gradient has been extended to various settings.
These extensions include continuous domains \citep{scott2011correlated}, batch BO \citep{wu2016parallel}, and problems with derivative observations \citep{wu2017bayesian}.
On the computational side, \citet{balandat2020botorch} have proposed computing the knowledge gradient via sample average approximation and a one-shot formulation, sidestepping the nested expectation and maximization.
More recently, \citet{daulton2023hypervolume} have developed the hypervolume knowledge gradient, a generalization to multi-objective BO.

However, instantiating the knowledge gradient for PBO has remained elusive due to its computational intractability.
The core difficulty is the look-ahead step, where preference observations induce a non-Gaussian posterior that breaks GP conjugacy.
To the best of our knowledge, the only two previous attempts in the literature applying the knowledge gradient to PBO are \citet{lin2022preference,astudillo2023qeubo}.
However, both of them resort to approximation that removes the challenging non-Gaussian look-ahead step.

In this paper, we show that the knowledge gradient acquisition function for pairwise preference learning surprisingly admits a closed-form expression.
This result only requires two mild assumptions: (a) the underlying surrogate model is a GP and (b) the data generation process follows a probit model.
Both are common choices ubiquitous in PBO.
Our key insight is that the look-ahead posterior follows an extended skew normal distribution, whose expectation is already available analytically \citep[\eg,][]{azzalini2013skew}.
As a result, we obtain an analytic formula for the look-ahead posterior mean despite that it is non-Gaussian.
Next, plugging it into the one-shot knowledge gradient formulation \citep{balandat2020botorch} yields an exact knowledge gradient acquisition function for PBO that requires no approximation.

Our results extend the applicability of the knowledge gradient to PBO, providing a principled and efficient alternative to existing methods.
Empirically, we demonstrate the effectiveness of the proposed knowledge gradient acquisition function on a suite of benchmark problems commonly used in the literature, where it is competitive against or outperform existing acquisition functions for PBO.
Through case studies, we also illustrate the potential failure mode of the knowledge gradient acquisition function in certain scenarios and contrast the behavior of the exact knowledge gradient with the existing approximate knowledge gradient by \citet{lin2022preference,astudillo2023qeubo}.

%% file: paper/background.tex
Let $f \sim \mathcal{GP}(\mu, k)$ be a Gaussian process (GP) defined on a domain $\Xc$ with a mean function $\mu: \Xc \to \Rb$ and a kernel function $k: \Xc^2 \to \Rb$.
The function values of $f$ at any finite collection of inputs follow a multivariate normal distribution determined by the mean and kernel functions of the GP.
The posterior of $f$ conditioning on a finite set of data is also a GP with updated mean and covariance functions \citep{williams2006gaussian}, both available in closed forms.

Bayesian optimization (BO) is a framework for optimizing black-box expensive-to-evaluate functions \citep{garnett2023bayesian}.
BO is concerned with maximizing a black-box function $f$ over a domain $\Xc$ with noisy function evaluations:
\begin{equation}
\label{eq:maximizing-black-box}
    \maxi_{\xv \in \Xc} f(\xv).
\end{equation}
The noisy function evaluations are given by $y = f(\xv) + \epsilon$ with $\epsilon$ being a Gaussian noise.
Let $\Dc = \cbb{(\xv_i, y_i)}_{i=1}^n$ be the set of such observations.
BO proceeds by constructing a probabilistic surrogate model based on the observations $\Dc$, where the surrogate model is often a GP model.
If we impose a GP prior on the unknown function $f$, then the posterior GP $f \mid \Dc$ gives the prediction for the unknown function along with with predictive uncertainty.

Then, BO iteratively selects next query candidates in the domain by maximizing an acquisition function.
The acquisition function measures the informativeness of each query candidate (the higher the value, the more informative) and is constructed based on the belief of the model about the unknown function.

\subsection{Knowledge Gradient}
The particular acquisition function that we will be focusing on in this paper is the knowledge gradient, a widely used look-ahead acquisition function in BO \citep{frazier2008knowledge,frazier2009knowledge,wu2016parallel,frazier2018tutorial}.
The main idea is using a look-ahead step to monitor how much the maximum GP posterior mean (\ie, the knowledge) would increase if a particular query candidate was selected.
The query outcome is unknown at the time of query selection, and thus the knowledge gradient takes an expectation over all possible fantasized outcomes according to the model belief.
In standard BO with noisy function evaluations, the knowledge gradient $\operatorname{kg}(\xv)$ of a particular query candidate $\xv$ is given by
\[
\Eb_{
    p\bb{y \mid \Dc}
} \\
\sbb[\bigg]{
    \max_{\xv^\prime \in \Xc}
    \Eb\sbb{
        f(\xv^\prime) \mid \Dc, \bb{\xv, y}
    }
}
-
\max_{\xv^\prime \in \Xc} \Eb\sbb{
    f(\xv^\prime) \mid \Dc
},
\]
where $y$ is the fantasized outcome and $p(y \mid \Dc)$ is the predictive distribution of the outcome by the GP $f$ given the current data $\Dc$.
The inner expectation is taken over the randomness of the probabilistic model $f$, while the outer expectation is taken over the predictive distribution of the fantasized outcome $y$.
Maximizing the knowledge gradient over $\xv$ produces the next query candidate to evaluate.
Note that the knowledge gradient involves a nested maximization, where $\xv^\prime$ is maxed out in the inner loop.

Due to the nested maximization and expectation, computing the knowledge gradient acquisition function is challenging in general.
Thus, one has to resort to approximation, \eg, discretizing the domain \citep{frazier2009knowledge} or using Monte Carlo methods to approximate the expectation and its gradient \citep[\eg,][]{wu2016parallel,wu2017bayesian,balandat2020botorch}.

\subsection{Preferential Bayesian Optimization}
Preferential Bayesian optimization (PBO) is concerned with maximizing the same black-box function $f$ as in \eqref{eq:maximizing-black-box}.
But this time the oracle is restricted to preferential observations.
The function $f$ is viewed as a latent utility function that governs the preference (the higher the utility, the more preferred).
In particular, we will be focusing on pairwise preferential comparisons:
$\xv_1 \succ \xv_2$ denotes $\xv_1$ is preferred to $\xv_2$,
and $\xv_1 \prec \xv_2$ denotes $\xv_2$ is preferred to $\xv_1$.
The latent function $f$ is not directly observed anymore.

Each pairwise comparison outcome is assumed to follow the preferential probit likelihood \citep{chu2005preference}:
\begin{equation}
\label{eq:preferential-probit-likelihood}
    \Pr(\xv_1 \succ \xv_2) = \Phi\bb[\big]{\tfrac{1}{\sigma} \bb{f(\xv_1) - f(\xv_2)} \mid f},
\end{equation}
where $\Phi(\,\cdot\,)$ is the standard normal CDF.
Here, $\sigma > 0$ is a scalar parameter that determines the noise in the pairwise comparison.
The outcome is deterministic as $\sigma \to 0$, and becomes fully random as $\sigma \to \infty$.
Note that the noise parameter $\sigma$ is not identifiable, since it can be absorbed into the scale of $f$.
Thus, there is some flexibility in choosing $\sigma$ when learning the latent function.

Let $\Dc$ be a set of preferential observations.
If we impose a GP prior on the latent utility $f$, then the posterior $p(f \mid \Dc)$ is non-Gaussian due to the preferential likelihood \eqref{eq:preferential-probit-likelihood}, which makes the downstream tasks challenging (\eg, acquisition function computation).
Thus, it is common to approximate this posterior by a GP using approximate inference such as Laplace approximation or variational inference \citep{chu2005preference,hensman2015scalable}.
From now on, we will assume that such an approximation is employed (unless stated otherwise), and we (somewhat confusingly) interpret $f \mid \Dc$ as the approximate GP posterior.

The knowledge gradient imposes a greater computational challenge in preferential BO as the pairwise comparison outcome follows a non-Gaussian distribution.
Even with the approximate GP posterior (which conditions on $\Dc$), the look-ahead posterior (which conditions on the duel between $\xv_1$ and $\xv_2$) in the knowledge gradient is still (seemingly) intractable.
In particular, previous attempts in the literature sidestep this computational challenge in the look-ahead step by approximation.
For example, \citet{lin2022preference} propose the expected utility of the best option (EUBO) as a proxy to the knowledge gradient:
\[
    \operatorname{EUBO}\bb{\xv_1, \xv_2} = \Eb \sbb[\big]{\max\cbb{f(\xv_1), f(\xv_2)} \mid \Dc},
\]
where the expectation is taken over the joint (approximate) posterior of $f(\xv_{1,2})$ conditioned on the data $\Dc$.
EUBO conveniently has an analytical expression if the posterior $f \mid \Dc$ is approximated by a GP, which is almost always the case in PBO.
To justify this approximation, \citet{lin2022preference} show that the maximizers of EUBO also maximize the exact knowledge gradient (which we will define in \S\ref{sec:method}) if the noise parameter $\sigma$ in the preferential probit likelihood \eqref{eq:preferential-probit-likelihood} vanishes.\footnote{
This original result \citet{lin2022preference} is proved for the logistic likelihood.
But the same proof applies to the probit likelihood as well.
}
Namely, EUBO can be viewed as an approximate knowledge gradient with a noiseless look-ahead step.
With a non-zero noise parameter, however, EUBO is only a lower bound of the exact knowledge gradient.

Subsequently, \citet{astudillo2023qeubo} generalizes EUBO to comparisons over a batch of $q$ candidates:
\[
    \operatorname{qEUBO}\bb{\xv_{1:q}} = \Eb \sbb[\bigg]{\max_{1 \leq i \leq q} f(\xv_i) \mid \Dc},
\]
where again the expectation is taken over the joint posterior of $f$ conditioned on the data $\Dc$.
Unfortunately, qEUBO does not have an analytical expression anymore, and thus has to relies on Monte Carlo approximation.

In \S\ref{sec:method}, we will show that the knowledge gradient for PBO with pairwise comparisons can be computed analytically, which renders approximation unnecessary for batch $q = 2$.

%% file: paper/method.tex
\begin{figure*}
\centering
    \includegraphics[width=\linewidth]{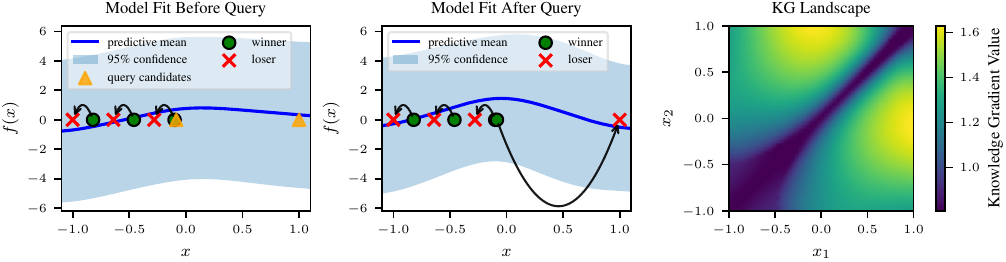}
\caption{
A one-dimensional example of preferential knowledge gradient.
The ground-truth latent function is a quadratic function $f(x) = -x^2$, whose maximum is attained at the origin.
\textbf{Left \& Mid:}
The GP model fit before and after the query.
The blue shaded region indicates two standard deviations of the GP posterior.
The green circles and red crosses are the observed preferential comparisons.
The arrows point from duel winners to duel losers.
The two triangles indicate the query candidates selected by maximizing the knowledge gradient.
\textbf{Right:}
The landscape of knowledge gradient $\operatorname{kg}(x_1, x_2)$ by maxing out the fantasy samples $\xv_{\pm}$.
}
\label{fig:one-dimensional-example}
\end{figure*}

We wish to design a knowledge gradient acquisition function to select a pair of preferential comparison candidates given a GP model $f \sim \mathcal{GP}(\mu, k)$.
Here, we assume $f$ is already the posterior GP conditioned on the existing preferential observations $\Dc$ so that $\mu(\,\cdot\,)$ and $k(\,\cdot\,, \,\cdot\,)$ are the \emph{posterior} mean and covariance functions respectively.
This way we can omit the conditioning on $\Dc$ to simplify the notation.
As discussed in \S\ref{sec:background}, the posterior GP necessarily is obtained by approximate inference since the preferential likelihood \eqref{eq:preferential-probit-likelihood} is non-Gaussian \citep{chu2005preference,hensman2015scalable}.
For simplicity, we fix the noise parameter $\sigma = 1$ in the preferential likelihood \eqref{eq:preferential-probit-likelihood} when computing the look-ahead step.
This is without loss of generality because our results in this section can be easily extended to other values of $\sigma > 0$ by scaling the GP $f$ accordingly.

The pairwise preferential comparison only has two possible outcomes: either \(\xv_1 \succ \xv_2\) or $\xv_1 \prec \xv_2$.
The knowledge gradient in the preferential setting is a weighted average of the two outcomes:\footnote{
Technically, the formula needs to subtract $\max_{\xv \in \Xc}\Eb\sbb{f(\xv)}$.
We omit this term since it is a constant independent of $\xv_{1:2}$.
}
\begin{align}
\label{eq:kg}
\begin{split}
\operatorname{kg} \bb{\xv_{1:2}} & = \Pr\bb{\xv_1 \succ \xv_2 \mid f}
    \max_{\xv \in \Xc} \Eb\sbb{
        f(\xv) \mid \xv_1 \succ \xv_2
    }
\\
+ & \Pr\bb{\xv_1 \prec \xv_2 \mid f}
    \max_{\xv \in \Xc} \Eb\sbb{
        f(\xv) \mid \xv_1 \prec \xv_2
    }.
\end{split}
\end{align}
The two terms $\Pr\bb{\xv_1 \succ \xv_2 \mid f}$ and $\Pr\bb{\xv_1 \prec \xv_2 \mid f}$ have closed forms since integration of the probit function against Gaussian has a closed form.
For example, the first one is computed as
\begin{align*}
\Pr\bb{\xv_1 \succ \xv_2 \mid f}
& =
\int \Phi\bb{f(\xv_1) - f(\xv_2)} \diff f(\xv_1) \diff f(\xv_2) \\
& =
\Phi\bb[\bigg]{
    \frac{
        \Eb\sbb{f(\xv_1) - f(\xv_2)}
    }{
        \sqrt{\operatorname{Var}\sbb{f(\xv_1) - f(\xv_2)} + 1}
    }
},
\end{align*}
where $f(\xv_1)$ and $f(\xv_2)$ are normal random variables jointly following a bivariate normal distribution determined by the GP.
Note that we have fixed the noise parameter $\sigma = 1$ in the preferential probit likelihood \eqref{eq:preferential-probit-likelihood} since this scalar can be pushed into the scale of $f$ when learning the latent function in practice.

To compute the knowledge gradient \eqref{eq:kg}, the one-step look-ahead posterior mean is required:
\begin{equation}
\label{eq:one-step-look-ahead-posterior-mean}
\Eb\sbb{
    f(\xv) \mid \xv_1 \succ \xv_2
},
\end{equation}
where the expectation is taken over the randomness of \(f\).
The challenge here is that the look-ahead posterior is non-Gaussian due to the preferential observation $\xv_1 \succ \xv_2$.
Thus, one would expect to resort to approximation methods such
as Monte Carlo sampling.
However, this expectation \eqref{eq:one-step-look-ahead-posterior-mean} actually has a closed-form despite being non-Gaussian.
\begin{lemma}
\label{thm:closed-form-look-ahead-posterior-mean}
The one-step look-ahead posterior mean \eqref{eq:one-step-look-ahead-posterior-mean} is given by
\begin{equation}
\label{eq:closed-form-one-step-look-ahead-posterior-mean}
\mu(\xv) + \frac{\phi(\tau)}{\Phi(\tau)} \cdot \frac{
    \operatorname{Cov} \sbb{f(\xv), f(\xv_1) - f(\xv_2)}
}{
    \sqrt{\operatorname{Var} \sbb{f(\xv_1) - f(\xv_2)} + 1}
},
\end{equation}
where
\(
\tau = \frac{
    \Eb\sbb{f(\xv_1) - f(\xv_2)}
}{
    \sqrt{\operatorname{Var} \sbb{f(\xv_1) - f(\xv_2)} + 1}
}
\) and $\phi(\,\cdot\,)$ and $\Phi(\,\cdot\,)$ are the standard normal PDF and CDF, respectively.
\end{lemma}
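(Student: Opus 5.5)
Our plan is to reduce the computation to a single Gaussian integral via the standard latent-variable trick for the probit likelihood. Write $Z = f(\xv_1) - f(\xv_2)$ and let $\varepsilon \sim \mathcal{N}(0,1)$ be independent of $f$. Then $\Phi(Z) = \Pr(\varepsilon \le Z \mid f)$, so the event $\xv_1 \succ \xv_2$ (with $\sigma = 1$) is the event $\{W > 0\}$ for $W = Z - \varepsilon$. Thus the look-ahead posterior mean \eqref{eq:one-step-look-ahead-posterior-mean} equals $\Eb[f(\xv) \mid W > 0]$.

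The pair $(f(\xv), W)$ is jointly Gaussian, since $W$ is a linear combination of GP values and independent Gaussian noise. Its moments are
\[
\Eb[W] = \Eb[Z], \qquad \operatorname{Var}[W] = \operatorname{Var}[Z] + 1, \qquad \operatorname{Cov}[f(\xv), W] = \operatorname{Cov}[f(\xv), Z],
\]
where the covariance is unchanged because $\varepsilon$ is independent of $f$.

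Next we regress $f(\xv)$ on $W$:
\[
f(\xv) = \mu(\xv) + \frac{\operatorname{Cov}[f(\xv), W]}{\operatorname{Var}[W]}\bigl(W - \Eb[W]\bigr) + R,
\]
where $R$ is Gaussian with mean zero and uncorrelated with $W$, hence independent of $W$. Conditioning on $W > 0$ leaves the mean of $R$ at zero. Therefore
\[
\Eb[f(\xv) \mid W > 0] = \mu(\xv) + \frac{\operatorname{Cov}[f(\xv), W]}{\operatorname{Var}[W]}\bigl(\Eb[W \mid W > 0] - \Eb[W]\bigr).
\]

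It remains to compute the truncated-normal mean. Standardize $W = \Eb[W] + s U$ with $s = \sqrt{\operatorname{Var}[Z] + 1}$ and $U \sim \mathcal{N}(0,1)$. Then $W > 0$ is the event $U > -\tau$, with $\tau$ as in the statement. The identity $\int_{-\tau}^\infty u\,\phi(u)\,du = \phi(\tau)$ gives $\Eb[U \mid U > -\tau] = \phi(\tau)/\Phi(\tau)$, hence
\[
\Eb[W \mid W > 0] - \Eb[W] = s\,\frac{\phi(\tau)}{\Phi(\tau)}.
\]
Substituting back, the factor $s/s^2 = 1/\sqrt{\operatorname{Var}[Z]+1}$ yields exactly \eqref{eq:closed-form-one-step-look-ahead-posterior-mean}. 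As a by-product, $\Pr(W > 0) = \Phi(\tau)$, which matches the earlier formula for $\Pr(\xv_1 \succ \xv_2 \mid f)$.

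The main point needing care is the first step: justifying that conditioning the GP on the probit-likelihood observation is the same as conditioning the augmented Gaussian vector on $\{W > 0\}$. This follows because, for any measurable $g$,
\[
\Eb[g(f)\,\Phi(Z)] = \Eb\bigl[g(f)\,\mathbf{1}\{W > 0\}\bigr],
\]
and the posterior is defined with density proportional to $\Phi(Z)$ times the prior. So Bayes' rule for the likelihood and conditioning on the event coincide, and this should be stated explicitly. Alternatively, the result follows from the extended skew normal mean formula in Appendix~\ref{sec:extended-skew-normal}. The remaining steps are routine Gaussian algebra.
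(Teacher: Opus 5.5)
Your proof is correct. Its first step coincides with the paper's: both introduce an independent $\epsilon \sim \Nc(0,1)$ and replace the probit observation by a half-space event on a jointly Gaussian pair. Your $W = Z - \varepsilon$ versus the paper's $Z + \epsilon$ makes no difference by symmetry. Your identity $\Eb[g(f)\Phi(Z)] = \Eb[g(f)\mathbf{1}\{W>0\}]$ also makes rigorous what the paper only argues by matching the probability $\Phi(Z)$.

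You differ at the bivariate step. The paper's appendix lemma writes down the conditional density of $x_1$ given $x_2 \geq 0$ and recognizes it as an extended skew normal. It then reads off the mean as the derivative at $t=0$ of the cited ESN cumulant-generating function, leaving the simplification $\alpha\omega/\sqrt{1+\alpha^2} = s_{21}/\sqrt{s_{22}}$ implicit. You instead regress $f(\xv)$ on $W$ and use that the Gaussian residual is independent of $W$. That reduces everything to the one-dimensional truncated-normal mean $\Eb[U \mid U > -\tau] = \phi(\tau)/\Phi(\tau)$.

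Your route is self-contained and elementary, and needs no ESN machinery or external moment formula. It also covers the degenerate case $\operatorname{Var}[f(\xv)] = 0$, which the paper's density-based parameterization tacitly excludes, since its $\alpha$ involves $s_{11}^{-1/2}$. The paper's route, in exchange, names the full look-ahead law and delivers all its cumulants from $K(t)$ at once; the Discussion relies on this for higher-moment look-ahead acquisition functions. Your decomposition of $f(\xv)$ given $W>0$, a scaled truncated normal plus an independent normal, is the standard additive representation of that same ESN law. So it could yield those moments too, just with more bookkeeping.
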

\begin{proof}
A crucial observation is that conditioning on the preferential observation $\xv_1 \succeq \xv_2$ is equivalent to conditioning on an inequality constraint:
\[
\Eb\sbb{f(\xv) \mid \xv_1 \succ \xv_2}
=
\Eb\sbb{f(\xv) \mid f(\xv_1) - f(\xv_2) + \epsilon \geq 0},
\]
where $\epsilon \sim \Nc(0, 1)$ is an independent univariate standard normal random variable.
To see this, recall that the preferential likelihood is given by
\[
\Pr\bb{\xv_1 \succ \xv_2 \mid f} = \Phi(f(\xv_1) - f(\xv_2)).
\]
The value of the right-hand side happens to be exactly the same as the probability of the linear inequality:
\[
    \Pr\bb{f(\xv_1) - f(\xv_2) + \epsilon \geq 0 \mid f}.
\]
Thus, the inequality constraint describes exactly the same event as the observation $\xv_1 \succ \xv_2$.
Note that $f(\xv)$ and $f(\xv_1) - f(\xv_2) + \epsilon$ follow a bivariate normal distribution if $f$ is a GP.
Then, by \Cref{thm:conditional-mean-bivariate-normal}, the conditional distribution
\[
    f(\xv) \mid f(\xv_1) - f(\xv_2) + \epsilon \geq 0
\]
follows an extended skew normal distribution, whose mean is exactly given by \eqref{eq:closed-form-one-step-look-ahead-posterior-mean}.
\end{proof}
Compared to $\mu(\xv)$, the mean before the look-ahead step, whether the posterior mean after the look-ahead step increases or not entirely depends on the correlation between $f(\xv)$ and $f(\xv_1) - f(\xv_2)$.
If they are positively correlated, the look-ahead posterior mean will increase with the observation $\xv_1 \succ \xv_2$.
Intuitively, this is because observing $\xv_1 \succ \xv_2$ suggests that $f(\xv_1)$ is likely larger than $f(\xv_2)$, which in turn suggests that look-ahead posterior mean will likely become larger than due to the positive correlation.
The magnitude of increase is scaled by the uncertainty in the latent utility difference $f(\xv_1) - f(\xv_2)$, as well as the PDF and CDF terms.

Note that \eqref{eq:closed-form-one-step-look-ahead-posterior-mean} only depends on the GP mean, variance and covariance, which are all available in closed forms.
Thus, the one-step look-ahead posterior mean conditioning on the outcome \(\xv_1 \succ \xv_2 \) can be computed analytically.
The other case \(\xv_1 \prec \xv_2\) clearly also has a closed form available thanks to symmetry.

Remarkably, now we can compute knowledge gradient for preference learning in a closed form.
To appreciate this, recall that the knowledge gradient acquisition function does not have a closed form even for the Gaussian likelihood widely used in the standard BO.
In fact, there has been a line of work dedicated to approximating the knowledge gradient acquisition function in these settings \citep[\eg,][]{wu2016parallel,wu2017bayesian,balandat2020botorch}.
Here, the non-Gaussianity in the preferential learning surprisingly makes the knowledge gradient computation easier.

The proof of \Cref{thm:closed-form-look-ahead-posterior-mean} is based on a simple observation that the one-step look-ahead posterior is an extended skew normal distribution, whose moments are readily available in the existing literature \citep[\eg,][]{azzalini2013skew}.
We direct readers to \S\ref{sec:extended-skew-normal} for more background on extended skew normal distributions and \S\ref{sec:proofs} for the detailed proof.

\subsection{One-Shot Knowledge Gradient}
Note that the knowledge gradient acquisition function \eqref{eq:kg} itself is defined by a maximization.
To generate candidates, it is required to maximize the knowledge gradient \eqref{eq:kg}, which yields a max-max problem.
We squash the nested maximization and optimize the ``augmented'' version by introducing two auxiliary variables $\xv_-$ and $\xv_+$ as follows:
\[
\maxi_{\xv_{1:2}, \xv_{\pm} \in \Xc} \operatorname{kg}(\xv_1, \xv_2, \xv_+, \xv_-),
\]
where
\begin{align*}
\operatorname{kg}(\xv_{1:2}, \xv_{\pm})
& =
\Pr\bb{\xv_1 \succ \xv_2 \mid f} \cdot
\Eb\sbb{
    f(\xv_+) \mid \xv_1 \succ \xv_2
}
\\
+ &
\Pr\bb{\xv_1 \prec \xv_2 \mid f} \cdot
\Eb\sbb{
    f(\xv_-) \mid \xv_1 \prec \xv_2
}.
\end{align*}
Once finishing optimizing the augmented acquisition function, we discard $\xv_{\pm}$ and only keep the query candidates $\xv_{1:2}$.
This trick is commonly known as the one-shot knowledge gradient, which squashes the nested maximization into a single maximization at the cost of raising the dimensionality of the problem \citep[\eg,][]{balandat2020botorch}.
The auxiliary variables $\xv_{\pm}$ are usually called fantasy samples.

\Cref{fig:one-dimensional-example} illustrates a one-dimensional example of the preferential knowledge gradient.
The knowledge gradient acquisition function is symmetric with respect to its two inputs $\xv_{1:2}$.
It usually has low values along the diagonal region where $\xv_1$ and $\xv_2$ are close to each other since such queries are less informative.
Before the query, the GP model has all samples on the left side ($x \leq 0$), and thus is uncertain about the latent function on the right side ($x > 0$).
Accordingly, the knowledge gradient acquisition function chooses to put one of the query candidates to the right side.
After the query, the GP model curves upwards around the origin where the ground-truth maximum is located.
Note that the uncertainty of the learned GP is quite large

\subsection{Exact Look-Ahead \vs Approximate Model}
Our knowledge gradient formulation is exact by leveraging an exact look-ahead step.
The GP model that the knowledge gradient formulation relies on, however, is still approximate as we assume the exact posterior $f \mid \Dc$ is approximated by a GP.
In other words, we have developed an exact knowledge gradient for approximate models in preferential BO.
Thus, the look-ahead posterior mean in \eqref{eq:closed-form-one-step-look-ahead-posterior-mean} is almost always not the same as the posterior mean obtained by approximate inference after conditioning on the new observation.
However, the look-ahead posterior mean still captures the essential information of how the posterior mean would likely to change, and thus provides learning signals for selecting informative queries.
This is similar in spirit to \citet{letham2022look}, who develop an exact look-ahead step for approximate models in the context of Bernoulli level set estimation.

%% file: paper/experiments.tex
\begin{figure*}
    \includegraphics[width=\textwidth]{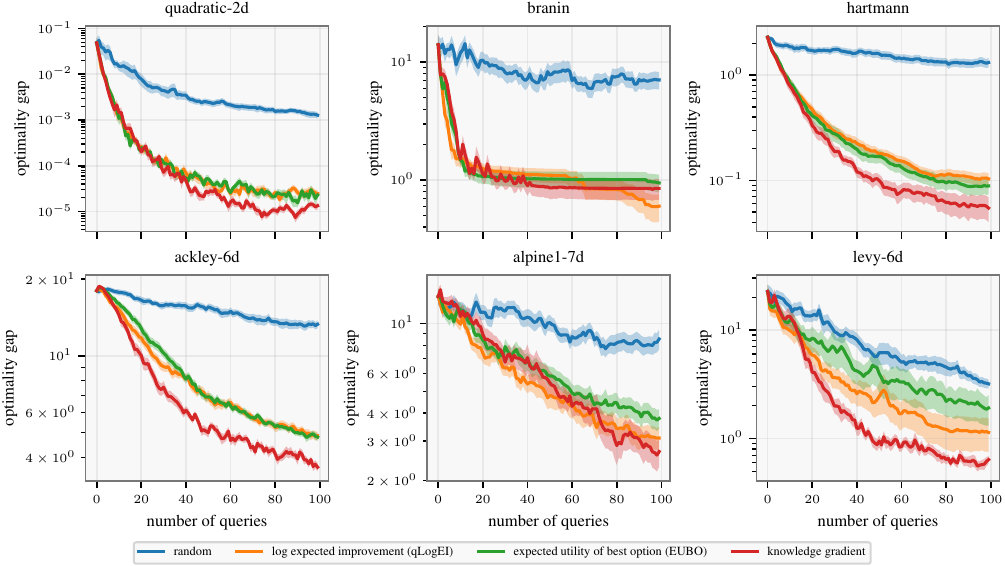}
\caption{
\textbf{(Low Noise)}
The optimality gaps in log scales against the number of queries for random sampling, log expected improvement, EUBO, and the knowledge gradient acquisition function.
A small amount of noise is added to the pairwise comparisons to simulate noisy preferences.
The knowledge gradient acquisition function outperforms all baselines on all problems except one.
Note that the Ackley function here uses the original domain $[-32.768, 32.768]^d$.
}
\label{fig:benchmark-noisy-low}
\end{figure*}

In this section, we empirically evaluate the exact knowledge gradient acquisition function in \S\ref{sec:method} for PBO.
We present results on $6$ benchmark functions against various baselines ranging from 2 dimensions to 7 dimensions.
The details for the benchmark test functions are deferred to \S\ref{sec:additional-experiments}.

The surrogate model for the latent utility function $f$ is a variational GP \citep{hensman2015scalable}.
We use a constant zero function as the GP prior mean since the preferential probit likelihood \eqref{eq:preferential-probit-likelihood} is translation invariant.
The GP uses the Mat\'ern kernel with $\nu = 2.5$ and ARD.
We impose Gamma priors on the kernel output scale and length scales.
The GP is trained by variational inference using the preferential probit likelihood \eqref{eq:preferential-probit-likelihood} with $\sigma = 1$.
Note that the noise parameter $\sigma$ here might be different from the one used to generate the preferential outcomes.
That is, the GP model aims to learn a scaled version of the ground-truth latent utility function.
All variaitonal GPs use at most $200$ inducing points that are selected from two sources:
(a) $100$ Sobol samples in the domain $\Xc$ and (b) the training inputs in the existing observations $\Dc$.
If the two sources yield more than $200$ points, we select the top $200$ points from them by greedy variance reduction, an inducing point allocation strategy by \citet{burt2020convergence}.
We use the off-the-shelf implementation with the same name in BoTorch \citep{balandat2020botorch}.

For a $d$-dimensional input space, all methods are initialized with $4d$ pairs of queries selected by Sobol sampling.
We run all methods for $100$ iterations after the initialization which yields a total of $4d + 100$ preferential outcomes.
The exact knowledge gradient uses a fixed noise parameter $\sigma = 1$ in the look-ahead step as developed in \S\ref{sec:method} and is compared against the following baselines.

\paragraph{Random}
This method selects a pair of query candidates by Sobol sampling in each iteration.
The GP model is still employed to estimate the maximizer of the latent utility function, but is not used to select queries.

\paragraph{LogEI}
An acquisition function designed for standard BO with Gaussian likelihoods \citep{ament2023unexpected}.
We use its batch version to select two query candidates at each iteration.
Note that this acquisition function is not modified in any way to handle preferential observations; we use it as it is.
The best incumbent value is set to the maximum posterior mean of the GP model on the training inputs.
This is consistent with \citet{lin2022preference,astudillo2023qeubo}.

\paragraph{EUBO} The expected utility of the best option (EUBO) by \citet{lin2022preference}, which can be viewed as an approximation to the knowledge gradient for PBO.
We use its implementation in BoTorch \citep{balandat2020botorch}.

\subsection{Results}
We present benchmark results in \Cref{fig:benchmark-noisy-low,fig:benchmark-noisy-high}, where we report the optimality gap against the number of queries.
The optimality gap at iteration $i$ is defined as
\[
    f_\mathrm{true}^* - f_\mathrm{true}(\hat{\xv}_i)
\]
where $f_\mathrm{true}^*$ is the maximum of the ground-truth latent utility function and
\[
    \hat{\xv}_i = \argmax_{\xv \in \Xc} \Eb\sbb{f(\xv) \mid \Dc_i},
\]
where $\Dc_i$ is the observations collected until the $i$-th iteration and $\hat\xv_i$ is the estimated maximizer in the $i$-th iteration based on the GP model $f$.

We add noises to pairwise comparisons to simulate inconsistent preferences by a positive noise parameter in the preferential likelihood \eqref{eq:preferential-probit-likelihood}
This parameter is tuned to each test function.
In \Cref{fig:benchmark-noisy-low}, the noise parameter $\sigma$ is tuned such that the pairwise comparisons among the top $1\%$ of the domain have errors with a probability of $10\%$.
In \Cref{fig:benchmark-noisy-high}, the noise parameter is tuned so that the error probability among the top $1\%$ of the domain increases to $30\%$.
The experiments are averaged over $20$ random seeds.
Note that the Ackley function in the experiments uses the original domain, while the corresponding experiment in \citet{astudillo2023qeubo} uses a smaller domain $[-2, 2]^d$.\footnote{See \url{https://tinyurl.com/2n395t7y}}

The knowledge gradient acquisition function is competitive across the benchmarks in both low-noise and high-noise settings, outperforming nearly all baselines in the first $100$ iterations.
We remark that the log expected improvement is competitive against EUBO and the knowledge gradient, even though it is not designed for preferential BO.

\begin{figure*}
    \includegraphics[width=\textwidth]{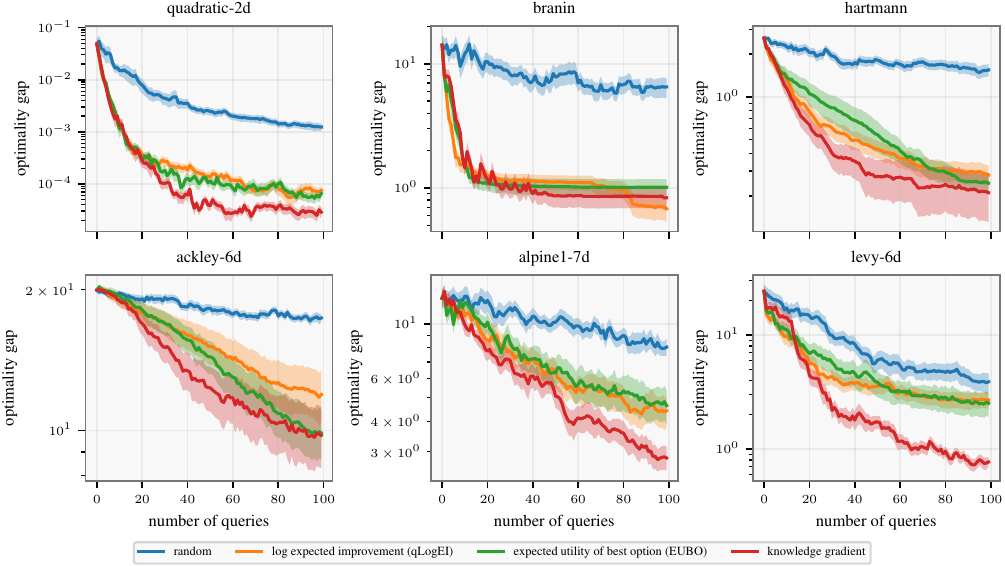}
\caption{
\textbf{(High Noise)}
The optimality gaps in log scales against the number of queries for random sampling, log expected improvement, EUBO, and
the knowledge gradient acquisition function.
The setting is the same as in \Cref{fig:benchmark-noisy-low} except that the pairwise comparisons are noisier.
}
\label{fig:benchmark-noisy-high}
\end{figure*}

\subsection{Case Study: 2D Levy function}
In this section, we present a case study on the $2$-dimensional Levy function.
In \Cref{fig:case-study-levy}, we visualize the queries selected by EUBO and the knowledge gradient.
The queries selected by the two methods exhibit very different patterns after they identify the promising region around the global maximum.
In particular, EUBO tends to select queries that ``collapse'' around the estimated maximum, whereas the knowledge gradient is less aggressive and selects the queries centering around the estimated maximizer.

This case study suggests that EUBO put more emphasis on exploitation compared to the knowledge gradient, which is consistent with the fact that EUBO is roughly equivalent to the knowledge gradient with a deterministic look-ahead step with the noise parameter $\sigma \to 0$ in the preferential probit likelihood \eqref{eq:preferential-probit-likelihood} \citep{astudillo2023qeubo}.
This exploitation behavior also checks out with the expression of the EUBO acquisition function:
\[
    \Eb\sbb[\big]{\max \cbb{f(\xv_1), f(\xv_2)} \mid \Dc},
\]
where the acquisition function value is likely higher if both $\xv_1$ and $\xv_2$ are close to the estimated maximizer.

In contrast, our exact knowledge gradient is computed with a noisy look-ahead step, where the look-ahead posterior is computed with the preferential probit likelihood \eqref{eq:preferential-probit-likelihood} with a unit noise parameter $\sigma = 1$.
Thus, the knowledge gradient acquisition function is reluctant to select two queries that are too close to each other, in which case the look-ahead step thinks the outcome would be noisy and not as informative.
Instead, the knowledge gradient selects queries that center around the estimated maximizer trying to learn the slope of the latent utility function and extrapolate the location of the global maximum.

This case study suggests that the approximate knowledge gradient, \eg, EUBO, can behave quite differently from the exact knowledge gradient.
In this case study, EUBO does achieve a smaller optimality gap than the exact knowledge gradient, showing that the exploitation behavior could be beneficial in certain cases, especially when the noise is small and the promising region predicted by the model contains the global optimizer.
The problem with the exact knowledge gradient in this case is that the look-ahead step uses a noise parameter that is larger than it supposed to be, and thus the exact knowledge gradient behaves more conservative than it should be.

\begin{figure*}
    \includegraphics[width=\textwidth]{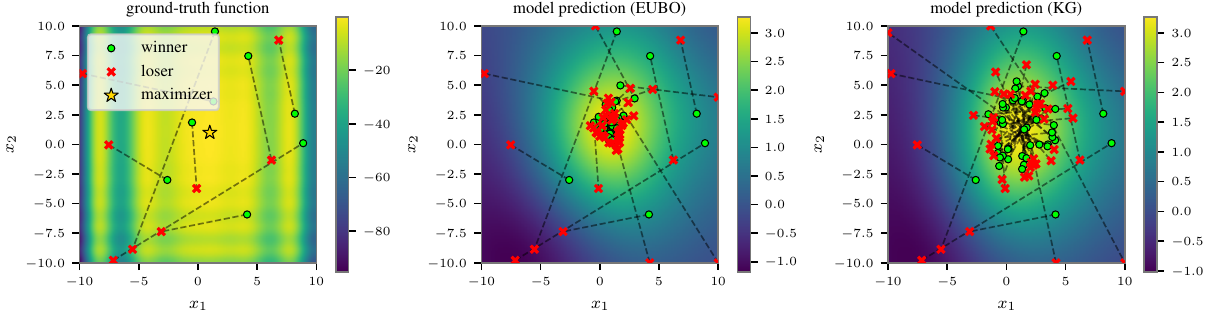}
\caption{
A comparison of the queries selected by EUBO and the knowledge gradient on the 2D Levy function.
The preferential outcomes are generated deterministically in this case study, \ie, the noise parameter $\sigma$ in the preferetial probit likelihood \eqref{eq:preferential-probit-likelihood} vanishes.
\textbf{Left:}
The queries selected by Sobol during initialization and the ground-truth latent utility function.
\textbf{Mid:}
The queries selected by EUBO after $50$ queries and the GP posterior mean.
\textbf{Right:}
The queries selected by the knowledge gradient after $50$ queries and the GP posterior mean.
Even though both methods identify the region of the optimizer, the queries have very different patterns.
In particular, EUBO tends to ``collapse'' the two queries close to the estimated maximum, while the knowledge gradient selects queries centers around the estimated maximum.
In this case, the optimality gap of EUBO is $0.0182$ while that of the knowledge gradient is $0.0191$ and EUBO achieves a better performance.
}
\label{fig:case-study-levy}
\end{figure*}

%% file: paper/related.tex
In this section, we discuss additional related work complementing \S\ref{sec:intro} and \S\ref{sec:background}.

\subsection{Skew Normal Distribution}
Normal distributions are symmetric and thus cannot model data skewness.
The skew normal distribution is developed to generalize normal distributions in this regard \citep{ohagan1978bayes}.
Since then, a vast number of extensions have been proposed; see \citet{arellano2006unification} for a few examples.
In particular, a sufficiently general version is called the unified skew normal (SUN) distribution,
a multivariate distribution that subsumes various existing versions as special cases \citep{arellano2006unification}.
The particular version that we use in \S\ref{sec:method} is the (univariate) extended skew normal distribution \citep[][Section 2.2]{azzalini2013skew}, which is also a special case of the SUN distribution.

The generality of the SUN distribution comes at a cost of complexity.
Its density function and cumulative distribution function both depend on the multivariate normal CDF, which in general is challenging to compute in high dimensions.
In addition, there is no direct method for sampling from the SUN distributions.
But fortunately SUN sampling can be reduced to sampling from linearly truncated multivariate normal distributions.
The latter has been studied extensively, and various specialized Monte Carlo methods exist \citep[\eg,][]{botev2017normal,gessner2020integrals,wu2024fast}.

Remarkably, the moment generating function of the SUN distribution is available in closed-form \citep[][]{arellano2006unification,arellano2022some}.
This property gives rise to the closed-form expression of the knowledge gradient in \S\ref{sec:method}.

\subsection{Skew Gaussian Processes}
Skew Gaussian processes generalize Gaussian processes by using the SUN distribution as the prior \citep{benavoli2020skew}.
Skew GPs contain standard GPs as special cases and better capture certain non-Gaussian data characteristics.
Indeed, skew GPs have been applied to various settings where standard GPs require approximate inference, particularly those involving preferential observations, cardinal data, and monotonicity constraints \citep[][]{benavoli2020skew,benavoli2021preferential,benavoli2024linearly,takeno2023towards}.
Our work is orthogonal to these modeling developments, and we leave developing the knowledge gradient acquisition function for skew GPs as future work.

\subsection{Preferential Bayesian Optimization}
The preferential likelihood \eqref{eq:preferential-probit-likelihood} is originally proposed by \citet{chu2005preference}.
Then, \citet{gonzalez2017preferential} develop the first acquisition function for preferential BO based on dueling bandits.
Some recent developments in this area include replacing the GP surrogate with an exact skew GP \citep[\eg,][]{takeno2023towards,benavoli2021preferential}, variants of the comparison oracle \citep{mikkola2020projective}, and regret bounds for preferential BO \citep{xu2024principled}.
The knowledge gradient for preferential BO has been considered by \citet{lin2022preference,astudillo2023qeubo}.
However, both of them employ approximation and thus the exact knowledge gradient remains unknown prior to our work.

%% file: paper/discussion.tex
We have developed a new knowledge gradient acquisition function for preference Bayesian optimization.
This acquisition function is exact and yet can be computed in a closed form despite that the look-ahead posterior is non-Gaussian.
Our formulation is based on two key observations.
First, the outcomes of pairwise comparisons are finite and thus the outer expectation in the knowledge gradient does not require Monte Carlo approximation.
Second, the one-step look-ahead posterior mean, a key quantity in the inner expectation of the knowledge gradient, surprisingly has a closed-form available in preference learning.

Various extensions are immediately possible.
Recall that the knowledge gradient belongs to a broader class of general look-ahead acquisition functions.
We have shown the one-step look-ahead posterior mean is available in a closed form.
More generally, all moments of the look-ahead posterior are available in closed forms \citep[\eg,][]{arellano2022some}.
Therefore, one can construct a broader class of look-ahead acquisition functions for preference learning beyond the knowledge gradient.
We hope that our work will lay out the basis for this direction.

%% file: appendix/extended-skew-normal.tex
In this section, we review some basic properties of extended skew normal distributions.
Everything in this section can be found in the textbook by \citet[][Section 2.2]{azzalini2013skew}.
\begin{definition}[Standard Form]
A univariate random variable $Z$ follows a standard extended skew normal distribution with parameters $\alpha \in \Rb$ and $\tau \in \Rb$ if and only if its density is of the form
\[
p(z; \alpha, \tau) = \frac{1}{\Phi(\tau)} \phi(z) \Phi\bb[\big]{\tau \sqrt{1 + \alpha^2} + \alpha z},
\]
where $\phi$ is the standard normal PDF and $\Phi$ is the standard normal CDF.
We denote it as $Z \sim \operatorname{ESN}(\alpha, \tau)$.
\end{definition}
General extended skew normal distributions are constructed from the standard extended skew normal distributions by shifting and scaling.
Let $Z \sim \operatorname{ESN}(\alpha, \tau)$ be a standard extended skew normal random variable.
Then, any random variable of the type $\xi + \omega Z$ is an extended skew normal random variable whose density is available through a change of variables.
\begin{definition}[General Form]
A univariate random variable $X$ follows an extended skew normal distribution with parameters $\xi \in \Rb$, $\omega > 0$, $\alpha \in \Rb$, and $\tau \in \Rb$ if and only if its density is of the form
\[
p(x; \xi, \omega^2, \alpha, \tau) =
\frac{1}{\omega \Phi(\tau)}
\phi\bb[\Big]{\frac{x - \xi}{\omega}}
\Phi\bb[\Big]{\tau \sqrt{1 + \alpha^2} + \alpha \frac{x - \xi}{\omega}},
\]
where $\phi$ is the standard normal PDF and $\Phi$ is the standard normal CDF.
We denote it as $X \sim \operatorname{ESN}(\xi, \omega^2, \alpha, \tau)$.
\end{definition}
The moment-generating function of an extended skew normal distribution $\operatorname{ESN}(\xi, \omega^2, \alpha, \tau)$ writes
\[
M(t) = \frac{1}{\Phi(\tau)} \exp\bb[\Big]{\xi t + \frac12 \omega^2 t^2} \Phi\bb[\Big]{\tau + \frac{\alpha \omega}{\sqrt{1 + \alpha^2}} t}.
\]
As a result, its cumulant-generating function writes
\[
K(t) = \log M(t) = -\log \Phi(\tau) + \xi t + \frac12 \omega^2 t^2 + \log \Phi\bb[\Big]{\tau + \frac{\alpha \omega}{\sqrt{1 + \alpha^2}} t}.
\]

%% file: appendix/proofs.tex
\begin{lemma}
\label{thm:conditional-mean-bivariate-normal}
Given a bivariate normal distribution
\[
\bb*{
\begin{matrix}
x_1 \\ x_2
\end{matrix}
}
\sim
\Nc\bb*{
\bb*{
\begin{matrix}
\mu_1 \\ \mu_2
\end{matrix}
},
\bb*{
\begin{matrix}
s_{11} & s_{12} \\
s_{21} & s_{22}
\end{matrix}
}
},
\]
the conditional expectation
\[
\Eb\sbb{x_1 \mid x_2 \geq 0}
=
\mu_1
+
\frac{
    \phi(\tau)
}{
    \Phi(\tau)
}
\cdot
\frac{
    s_{21}
}{
    \sqrt{s_{22}}
},
\]
where $\tau = \frac{\mu_2}{\sqrt{s_{22}}}$, $\phi$ is the normal PDF, and $\Phi$ is the normal CDF.
\end{lemma}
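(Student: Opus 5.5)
The plan is to reduce to a standard normal variable for the constraint and use linear regression of $x_1$ on $x_2$. First assume $s_{22} > 0$, which is the only case where the statement makes sense. Write
\[
x_1 = \mu_1 + \frac{s_{12}}{s_{22}}(x_2 - \mu_2) + r,
\]
where $r$ is the Gaussian regression residual. The residual $r$ has mean zero and $\operatorname{Cov}(r, x_2) = s_{12} - \frac{s_{12}}{s_{22}} s_{22} = 0$. Since $(r, x_2)$ is jointly Gaussian, $r$ is independent of $x_2$. Hence $\Eb\sbb{r \mid x_2 \geq 0} = \Eb[r] = 0$, because the event $\{x_2 \geq 0\}$ has positive probability $\Phi(\mu_2/\sqrt{s_{22}}) > 0$. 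This leaves
\[
\Eb\sbb{x_1 \mid x_2 \geq 0} = \mu_1 + \frac{s_{12}}{s_{22}}\Eb\sbb{x_2 - \mu_2 \mid x_2 \geq 0}.
\]

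Next, write $x_2 = \mu_2 + \sqrt{s_{22}}\, Z$ with $Z \sim \Nc(0,1)$. The event $x_2 \geq 0$ is the event $Z \geq -\tau$ with $\tau = \mu_2/\sqrt{s_{22}}$. The truncated normal mean is
\[
\Eb[Z \mid Z \geq -\tau] = \frac{\int_{-\tau}^\infty z\phi(z)\,dz}{1 - \Phi(-\tau)} = \frac{\phi(-\tau)}{\Phi(\tau)} = \frac{\phi(\tau)}{\Phi(\tau)},
\]
using $\phi' = -z\phi$ and the symmetry of $\phi$. Therefore $\Eb\sbb{x_2 - \mu_2 \mid x_2 \geq 0} = \sqrt{s_{22}}\,\phi(\tau)/\Phi(\tau)$. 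Multiplying by $s_{12}/s_{22}$ and using $s_{12} = s_{21}$ gives the claimed $\mu_1 + \frac{\phi(\tau)}{\Phi(\tau)}\cdot\frac{s_{21}}{\sqrt{s_{22}}}$.

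As a consistency check, I will relate this to \S\ref{sec:extended-skew-normal}. The conditional law of $x_1$ is $\operatorname{ESN}$ with $\xi = \mu_1$, $\omega^2 = s_{11}$ and $\alpha\omega/\sqrt{1+\alpha^2} = s_{12}/\sqrt{s_{22}}$. Differentiating the cumulant-generating function $K$ at $t = 0$ gives the same mean. I will keep the elementary regression argument as the main proof, since it avoids matching the ESN parameters.

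No step is a real obstacle. The points needing care are three. The first is the justification that zero covariance implies independence, which relies on joint Gaussianity. The second is ruling out the degenerate case $s_{22} = 0$. The third is the sign convention in the truncation: conditioning on $Z \geq -\tau$ yields $\phi(\tau)/\Phi(\tau)$ and not $\phi(\tau)/(1-\Phi(\tau))$.
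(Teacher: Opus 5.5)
Your proof is correct, but it takes a different route from the paper's. The paper applies Bayes' theorem to write the conditional density of $x_1$ given $x_2 \geq 0$ explicitly. It matches that density to the extended skew normal form with $\xi = \mu_1$, $\omega = \sqrt{s_{11}}$, $\tau = \mu_2/\sqrt{s_{22}}$ and $\alpha = s_{21} s_{11}^{-1/2}/\sqrt{s_{22} - s_{21}^2/s_{11}}$. It then differentiates the quoted ESN cumulant-generating function at $t=0$ to get $\xi + \frac{\phi(\tau)}{\Phi(\tau)}\cdot\frac{\alpha\omega}{\sqrt{1+\alpha^2}}$, leaving the simplification $\alpha\omega/\sqrt{1+\alpha^2} = s_{21}/\sqrt{s_{22}}$ implicit. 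You instead regress $x_1$ on $x_2$, remove the residual by independence, and reduce everything to the one-dimensional truncated-normal mean $\mathbb{E}[Z \mid Z \geq -\tau] = \phi(\tau)/\Phi(\tau)$.

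Your route buys three things. It is self-contained, since it does not depend on the MGF formula cited from Azzalini. It skips the parameter matching entirely. It also needs only $s_{22} > 0$, whereas the paper's route needs a nondegenerate covariance ($s_{11} > 0$ and $s_{11}s_{22} > s_{12}^2$) for the density to exist and for $\alpha$ to be defined. So you also cover cases such as $s_{11} = 0$. In the application to \Cref{thm:closed-form-look-ahead-posterior-mean}, the unit-variance $\epsilon$ guarantees $s_{22} \geq 1$, so your single hypothesis always holds there.

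The paper's route, in exchange, identifies the entire conditional law and yields all cumulants by further differentiation of $K$, which is what the discussion section relies on for higher-moment look-ahead acquisitions. Your decomposition delivers this too, though only implicitly. Conditioned on $x_2 \geq 0$, $x_1$ is distributed as $\mu_1 + (s_{12}/s_{22})(T - \mu_2) + r$, where $T$ has the law of $x_2$ truncated to $[0,\infty)$ and $r$ is an independent Gaussian. That is exactly the additive representation of the ESN, so higher moments follow from your argument as well.
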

\begin{proof}
By Bayes' theorem, the conditional density writes
\[
p\bb{x_1 \mid x_2 \geq 0}
=
\frac{1}{
    \Phi\bb[\big]{\frac{\mu_2}{\sqrt{s_{22}}}}
} \cdot
\frac{1}{\sqrt{s_{11}}}
\phi\bb[\bigg]{
    \frac{x_1 - \mu_1}{\sqrt{s_{11}}}
} \cdot
\Phi\bb*{
    \frac{
        \mu_2 + s_{21} s_{11}\inv (x_1 - \mu_1)
    }{
        \sqrt{s_{22} - s_{21} s_{11}\inv s_{12}}
    }
},
\]
where $\Phi(\,\cdot\,)$ is the standard normal CDF and $\phi(\,\cdot\,)$ is the standard normal PDF.
Clearly, this is an extend skew normal (ESN) distribution with parameters
\[
\alpha = \frac{s_{21} s_{11}^{-\frac12}}{\sqrt{s_{22} - s_{21} s_{11}\inv s_{12}}}, \quad
\tau = \frac{\mu_2}{\sqrt{s_{22}}}, \quad
\xi = \mu_1, \quad
\omega = \sqrt{s_{11}}.
\]
See \S\ref{sec:extended-skew-normal} for background on extended skew normal (ESN) distributions.
The ESN cumulant-generating function is available in a closed-form; see \S\ref{sec:extended-skew-normal}.
Differentiating the cumulant-generating function $K(t)$ at $t = 0$ gives the mean
\[
\Eb[x_1 \mid x_2 \geq 0]
=
\sbb*{\frac{\diff}{\diff t} K(t)}_{t=0}
=
\xi + \frac{\phi(\tau)}{\Phi(\tau)} \cdot \frac{\alpha \omega}{\sqrt{1 + \alpha^2}}.
\]
\end{proof}

%% file: appendix/additional-experiments.tex
The test functinions used in the benchmarks are as follows:
\begin{itemize}
\item Quadratic function (2D): $f(x) = \frac12 (x_1^2 + x_2^2)$ defined on $[-1, 1]^2$.
\item Branin function (2D)
\item Hartmann function (6D)
\item Ackley function (6D)
\item Alpine1 function (7D)
\item Levy function (6D)
\end{itemize}
Except the quadratic function, all other functions are taken from BoTorch \citep{balandat2020botorch}.
All test functions are originally designed for minimization, and thus we negate them to convert them into maximization problems.